\newtheorem{keywords}{Keywords}
\newtheorem{definition}{Definition}
\newtheorem{theorem}{Theorem}
\newtheorem{proof}{Proof}
\newtheorem{remark}{Remark}
\begin{document}
\title{On rough mereology and VC-dimension in treatment of decision prediction for open world decision systems}
\author{Lech T. Polkowski\\ Department of Mathematics and Informatics\\ University of Warmia and Mazury in Olsztyn Poland\\ e-mail: polkow@pja.edu.pl}
\date{}
\maketitle

\hspace{2cm} {\em In memory of Professor Helena Rasiowa (1917-1994)}

\begin{abstract}\ Given a raw knowledge in the form of a data table/a decision system, one is facing two possible venues. One, to treat the system as closed, i.e., its
universe does not admit new objects, or, to the contrary, its universe is open on admittance of new objects. In particular, one may obtain new objects whose sets of  values
of features are new to the system. In this case the problem is to assign a decision value to any such new object. This problem is somehow resolved in the rough set theory,
e.g., on the basis of similarity of the value set of a new object to value sets of objects already assigned a decision value.\ It is crucial for online learning when each new
object must have a predicted decision value.\ There is a vast literature on various methods for decision prediction for new yet unseen object, we  mention a few works below.

The approach we propose is founded in the theory of rough mereology and it requires a theory of sets/concepts, and, we root our theory in classical set theory of
Syllogistic within which we recall the theory of parts known as Mereology. Then, we recall our theory of Rough Mereology along with the theory of weight assignment to the
Tarski algebra of Mereology.\ This allows us to introduce the notion of a part to a degree.

Our choice of Syllogistic as the theory of concepts is motivated not only by simplicity of this theory and its completeness but also by our remembrance of Professor Helena
Rasiowa who studied in years 1939-1943 at Clandestine Warsaw University with Jan \L ukasiewicz, the great authority on Aristotle's Syllogistic.

Once we have defined basics of Mereology and rough Mereology, we recall our theory of weight assignment to elements of the Boolean algebra defined within Mereology and this
allows us to define the relation of parts to the degree and we apply this notion in a procedure to select a decision for new yet unseen objects.\ In selecting a plausible
candidate which would  pass its decision value to the new object, we employ the notion of Vapnik - Chervonenkis dimension in order to select at the first stage the
candidate with the largest VC-dimension of the family of its $\varepsilon$-components for some choice of $\varepsilon$.

Our text is, accordingly, divided into sections which are, respectively:\

1.\ The section on basic notions of the rough set theory.

2.\ The section on Syllogistic.

3.\ The section on Mereology.

4.\ The section on weight assignment.

5.\ The section on Rough Mereology.

6.\ The section on our idea for the treatment of new objects in the Open World Approach (OWA).

7.\ On the problem of mistakes.
\end{abstract}

\begin{keywords} Syllogistic, Mereology, Rough Mereology, Rough Set Theory, decision systems, Open World assumption, Vapnik-Chervonenkis dimension.\end{keywords}

\section{The Rough Set Theory: Basic notions}\label{RST} We begin with a decision system $\mathcal{D}$ = $(O, F,d,V, V_d)$ in which $O$ is a set of objects, $F$ is a set of
features, $d\notin F$ is the decision feature,  $V$ is the set of feature values, and, $V_d$ is the set of decision values. The set $F$ consists of {\em subjective features}
of our free choice. We assume the set $V_d$ to be a convex set in a finite dimensional vector space, in particular it may be the interval in the real line.

We adopt the latter as our standard model. Rough Set Theory due to Pawlak \cite{Pawlak} aims at description of concepts - subsets of the set $O$ and a fortiori at assigning
to decision values their descriptions in terms of values of subjective features. Expressions which do this work are {\em decision rules}, their sets forming decision
algorithms, particular cases of Datalog cf. Abiteboul et al. \cite{Abiteboul}. The premiss of a decision rule is a conjunct of descriptors of the form $\bigwedge_i (f_i,
v_i)$ where $f_i$ is a subjective feature and $v_i$ its value. The conclusion is a descriptor of the form $(d, v)$ where $v$ is a decision value, features $f_i$ run over a
subset $R$ of the set $F_1$ and the decision rule in question is of the form $r: \bigwedge_{f\in R} (f, v_f)\rightarrow (d,v)$.\ The arrow symbol $\rightarrow$ is used only
in the context of decision rules, otherwise and in general we use the horseshoe symbol $\supset$.\

Concepts definable as subsets of the set of objects $O$ are unions of indiscernibility classes of the form $IND_H(o)=\{o^{\prime}\in O: f(o)=f(o^{\prime})\}$ for $o\in O$ and
$f\in H$ for $H\subset F_1$; other concepts are non-definable (rough) and they may be defined only approximately by Jordan - type approximations by definable sets.

The interested reader may consult for rough set theory proceedings of the annual series of IJCRS conferences (LNCS, Springer-Verlag) and issues of Transactions on Rough Sets
\cite{TRS}; an account of foundations may be found in Pawlak \cite{Pawlak}, \cite{Pawlak 1}, Pawlak et al. \cite{PPS}, Polkowski \cite{Polkowski 0}, or, Komorowski et al.
\cite{Komor}. Among works dedicated to prognosis of decision values for new yet unseen objects we may mention Chen et al., \cite{Chen}, D\"{u}ntsch and Gediga \cite{Dun},
Udristoiu et al. \cite{Udr} and Wu et al. \cite{Wu}. Rough set theory-based analysis and approximation of decision is the subject in Pawlak-S\l owi\'{n}ski \cite{PS}, Yao
\cite{Yao}, Greco et al. \cite{GMS}, Jian et al. \cite{Jian}, Polkowski and Skowron \cite{PoS}, \cite{PoS1}, Pal et al.\cite{Pal}, Polkowski and Artiemjew \cite{PA}, among
many others.

\section{Syllogistic in a nutshell} We choose as our theory of collective terms the Aristotle Syllogistic presented originally in {\em Analytics} and translated and edited,
e.g.,  in Ross \cite{Ross} and we borrow our description of Syllogistic from \L ukasiewicz \cite{Lukasiewicz 1}. The reason for this choice is given in Abstract. We use the
expression 'term' as it is used in analysis of the Aristotle's system of Syllogistic in \cite{Lukasiewicz 1}. Syllogistic involves terms in premisses in which one term is
predicated of the other term like in the expression 'Europeans are predicated of all Poles' where 'Poles' is the subject and 'Europeans' is the predicate. The expression
'All' makes the premiss into a universal one. Other qualifiers to premisses are 'No', 'Some'. 'No' qualifies a premiss as universal and 'Some' qualifies a premiss as
particular. Let us note that all predicates are mass expressions like 'man', 'Pole', 'human being'. Some divergent opinions concern subjects,  \L ukasiewicz takes  the view
that subjects should also be collective terms,  some other authors like Smith \cite{Smith} allow singular terms as subjects as in the  syllogism: 'If all men are mortal and
Socrates is a man then Socrates is mortal'. We make use of this possibility later on.

 Analysis by \L ukasiewicz led to the formulation of expressions of Syllogistic as syllogisms rendered as implications  like 'If All Poles are Europeans and All Warsawians
 are Poles then All Warsawians are Europeans'. Abstract forms of syllogisms like '*\ If All a is b and All b is c then All a is c' are moods divided into four Figures. The
 form of a mood is not uniquely adopted in the literature. We follow the version by \L ukasiewicz with the subject before the predicate, some authors, e.g., Smith
 \cite{Smith}  place the predicate before the subject. In some places in those moods 'All' is replaced by 'Some' or 'No', or, 'Some ... is Not ...' . Aristotle used in
 validity proofs methods of reduction of  moods to some 'obvious' moods, see \cite{Lukasiewicz 1}. Medieval scholars gave moods names like 'Barbara' for the mood $^*$ above
 and that name system hid methods of reduction cf. Lagerlund \cite{Lagerlund}.  One may also use Eulerian circles (alias Venn diagrams). Aristotle  established validity of 24
 moods out of possible 256. Those 24 moods provide a complete description of relations among mass collections of things. We introduce Syllogistic in the axiomatic form due to
 \L ukasiewicz \cite{Lukasiewicz 1}.\

The formal notation for premisses is:\

$Aab$: All $a$ are $b$.

$Iab$: Some $a$ is $b$.

$Oab$: Some $a$ is Not $b$.

$Eab$: No $a$ is $b$.\

Constants $A,I$ come from the first two wovels in 'affirmo' and constants $E,O$ are the vowels in 'nego'. Premisses are made into moods of Syllogistic written down for short
by means of sentential sentence-making functors of conjunction $\wedge$ and implication $\supset$.

\begin{definition}\label{mood} [A logical form of a mood] For $X, Y, Z \in \{A, I, O, E\}$, a mood is of the form

\begin{equation}\label{moodform}
Xam\wedge Ymb\supset Zab
\end{equation}
\end{definition}

 The deductive scheme for selection of all valid moods was proposed in \L ukasiewicz \cite{Lukasiewicz 1}. We recall that scheme.\

{\bf Axiom schemes}

(T1)\ $Aaa$.

(T2)\ $Iaa$.

(T3)\ $Amb\wedge Aam\supset Aab$.

(T4)\ $Amb\wedge Ima\supset Iab$.\

{\bf Inference rules}\

(D)\ Detachment.
(S)\ Substitution of a sentential variable or a term for equiform occurrences of a variable.

(R)\ Replacement by equivalents: $(R1):\ Oab\equiv \neg Aab$, $(R2):\ Eab\equiv \neg Iab$.\

Derivations of moods require some laws of contradiction, sub-alternation or  conversion.

Exemplary rules for derivations are \cite{Lukasiewicz 1}:\

(L1)\ The law of contradiction: $Eab\supset \neg Iab$: {\em if No a is b, then it is not true that Some a is b}.

Proof:
1.\ In the theorem $p\supset p$ of sentential calculus, substitute $p/\neg Iab$ and apply the replacement rule (R2).\

(L2)\ The law of sub-alternation: $Aab\supset Iab$: {\em if All a is b, then Some a is b}.

Proof:
1.\ In the theorem $((p\wedge q)\supset r) \supset (q\supset (p\supset r))$ of the sentential calculus, substitute $p/Aab$, $q/Iaa$, $r/Iab$. The obtained formula is $\phi$.

2.\ In (T4), substitute $m/a$, let the obtained formula be $\psi$.

3.\ Detach $\psi$ from $\phi$, call the obtained formula $\chi$.

4.\ Detach  (T2) from $\chi$.\

(L3) The law of conversion: $Iab\supset Iba$: {\em if Some a is b, then Some b is a}.

Proof:
1.\ In the theorem $((p\wedge q)\supset r) \supset (p\supset (q\supset r))$ of the sentential calculus, substitute $p/Aaa$, $q/Iab$, $r/Iba$, call the obtained formula
$\phi$.

2.\ In (T4), substitute $m/a$, $a/b$, $b/a$, call the obtained formula $\psi$.

3.\ Detach $\psi$ from $\phi$, call the obtained formula $\chi$.

4.\ Detach (T1) from $\chi$.\

(L4)\ The law of conversion: $Aab\supset Iba$ follows from (L2) and (L3) by the hypothetical syllogism.\

We may now include after \cite{Lukasiewicz 1} an exemplary proof of the mood Barbari: $Amb\wedge Aam\supset Iab$.\

Proof of Barbari: 1.\ In the theorem $((p\wedge q)\supset r) \supset ((s\supset q)\supset ((p\wedge  s)\supset r)$ of the sentential calculus, substitute $p/Amb$, $q/Ima$,
$r/Iab$, $s/Aam$. Let the result be $\phi$.

2.\ Detach (T4) from $\phi$ and call the result $\psi$.

3.\ IN (L4), substitute $b/m$. The result is Barbari.\

In analogous veins, one obtains 22 valid moods which along with (T3) (Barbara) and (T4) (Datisi) make up for all 24 valid moods of Syllogistic.
This leaves 232 moods verified by Aristotle (loc.cit) as invalid.\

\L ukasiewicz, see \cite{Lukasiewicz 1}, produced an axiomatic system for rejection of moods. Its axiom schemes are as follows; the asterisk means: the formula is rejected.

\begin{enumerate}
\item{F1}\ $*\ Acb \wedge Aab \supset Iac$.
\item{F2}\ $*\ Ecb \wedge Eab \supset Iac$.
\end{enumerate}

Rejection rules are the following.\

(F3)\ Rejection by detachment:\ if the implication $p\supset q$ is accepted and $q$ is rejected, then $p$ is to be rejected.

(F4)\ Rejection by substitution:\  if $q$ is obtained by a valid substitution from $p$  and $q$ is rejected, then $p$ is to be rejected. \

Although it is traditional to list moods divided into Figures, yet we list all valid moods by the conclusions. We only list premisses  for the given conclusion. After the
name of a mood, we give the negation of the mood, for a further purpose.\

(M1)\ For the conclusion $Aab$.\

(M1.1)\ $Amb\wedge Aam$. [Barbara]\ $Amb\wedge Aam\wedge Oab$.\

(M2)\ For the conclusion $Iab$.\

(M2.1)\ $Amb, Ima$. [Datisi]\  $Amb\wedge Ima\wedge Eab$.\

(M2.2)\ $Amb, Aam$. [Barbari]  $Amb\wedge Aam\wedge Eab$.\

(M2.3)\ $Amb, Iam$. [Darii]\   $Amb \wedge Iam\wedge Eab$.\

(M2.4)\ $Amb, Ama$. [Darapti]\ $Amb \wedge Ama\wedge Eba$.\

(M2.5)\ $Imb, Ama$. [Disamis]\ $Imb \wedge Ama \wedge Eba$.\

(M2.6)\ $Abm, Ama$. [Bamalip]\ $Abm \wedge Ama \wedge Eba$.\

(M2.7)\ $Ibm, Ama$. [Dimatis]\ $Ibm \wedge Ama \wedge Eba$.\\

(M3)\ For the conclusion $Eab$.\\

(M3.1)\ $Emb, Aam$. [Celarent]\ $Emb \wedge Aam\wedge Iab$.\

(M3.2)\ $Ebm, Aam$. [Cesare]\  $Ebm \wedge Aam \wedge Iab$.\

(M3.3)\ $Abm, Eam$. [Camestres]\ $Abm \wedge Eam \wedge Iab$.\

(M3.4)\ $Abm, Ema$. [Calemes]\  $Abm \wedge  Ema \wedge Iab$.\\

(M4)\ For the conclusion $Oab$.\\

(M4.1)\ $Emb, Aam$. [Celaront]\ $Emb\wedge Aam \wedge Aab$.\

(M4.2)\ $Emb, Iam$. [Ferio]\ $Emb \wedge Iam \wedge Aab$.\

(M4.3)\ $Ebm, Aam$. [Cesaro]\ $Ebm \wedge Aam \wedge Aab$.\

(M4.4)\ $Abm, Eam$. [Camestrop]\ $Abm \wedge Eam \wedge Aab$.\

(M4.5)\ $Ebm, Iam$. [Festino]\ $Ebm \wedge Iam \wedge Aab$.\

(M4.6)\ $Abm, Oam$. [Baroco]\  $Abm \wedge Oam \wedge Aab$.\

(M4.7)\ $Emb, Ama$. [Felapton]\ $Emb \wedge Ama \wedge Aab$.\

(M4.8)\ $Omb, Ama$. [Bocardo]\  $Omb\wedge Ama \wedge Aab$.\

(M4.9)\ $Emb, Ima$. [Ferison]\  $Emb \wedge Ima \wedge Aab$.\

(M4.10)\ $Abm, Ema$. [Camelop]\ $Abm \wedge Ema \wedge Aab$.\

(M4.11)\ $Ebm, Ama$. [Fesapo]\  $Ebm \wedge Ama \wedge Aab$.\

(M4.12)\ $Ebm, Ima$. [Fresison]\ $Ebm \wedge Ima \wedge Aab$.\\

Completeness of Syllogistic was proved in S\l upecki \cite{Slupecki}, \cite{Slupecki 1}. The fact that terms may be interpreted as sets/concepts, prompted authors to transfer
the mechanisms of Syllogistic into set structures of modern logic, see Corcoran \cite{Corcoran 1}, Andrade-Becerra \cite{Andrade}, Moss \cite{Moss}. In this setting
completeness of Syllogistic  was proved, e.g.,  in Corcoran \cite{Corcoran} and in Moss \cite{Moss} who also considers some fragments of Syllogistic and gives Henkin-style
proofs of completeness of those fragments.\

Regarding Syllogistic as a formal system, we may  hint at yet another  proof of completeness. We include its idea.
\begin{definition}\label{Cons}[Consistency]\ A provable formula $\phi$ in an axiomatic system $\mathcal{A}$ is consistent if and only if its negation $\neg\phi$ is not
provable.
\end{definition}
\begin{definition}\label{model} [Models for moods]\ A model for a mood of Syllogistic is a set of 3 Eulerian circles each of them representing the respective premiss or
conclusion. \end{definition}
\begin{definition}\label{comp}[Completeness] An axiomatic system is complete if each valid formula is provable.\end{definition}
\begin{theorem}\ An axiomatic system is complete if and only if each consistent formula is true in some model.\end{theorem}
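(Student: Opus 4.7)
The statement is the classical equivalence between syntactic completeness (every valid formula is provable) and the model existence property (every consistent formula is satisfiable). My plan is to prove both directions by contraposition, using the interplay between validity, provability, and consistency as fixed by Definitions \ref{Cons}, \ref{model}, and \ref{comp}.

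For the forward direction, I would assume the axiomatic system is complete and let $\phi$ be a consistent formula. Suppose toward a contradiction that $\phi$ is true in no model. Then $\neg\phi$ holds in every model, so $\neg\phi$ is valid. By the completeness assumption, $\neg\phi$ is provable. But then $\phi$ cannot be consistent, because Definition \ref{Cons} demands that its negation not be provable. This contradicts our choice of $\phi$, so $\phi$ must be true in at least one model.

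For the converse, I would assume that every consistent formula is true in some model, and take an arbitrary valid $\phi$. Suppose toward a contradiction that $\phi$ is not provable. Applying Definition \ref{Cons} to $\neg\phi$, this formula is consistent (its negation $\neg\neg\phi$, i.e.\ $\phi$, is not provable). Hence by assumption, $\neg\phi$ is true in some model $\mathcal{M}$. But then $\phi$ is false in $\mathcal{M}$, contradicting the validity of $\phi$. Therefore $\phi$ is provable, and the system is complete.

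The routine structural part is the contrapositive shuffling; the only delicate point is the implicit reliance on classical duality $\neg\neg\phi\equiv\phi$ and on the semantic principle that $\neg\phi$ is true in $\mathcal{M}$ iff $\phi$ is false in $\mathcal{M}$. Within the Syllogistic setting this is underwritten by the replacement rules (R1)--(R2) and by the Eulerian-circle semantics of Definition \ref{model}, where the complementary diagram of a premiss witnesses its negation. The main conceptual obstacle, therefore, is not the logical manipulation but verifying that the chosen notion of ``model'' (Eulerian circles) is rich enough to falsify any non-valid formula; once this adequacy of the diagram semantics is granted, the two contrapositive arguments close the equivalence.
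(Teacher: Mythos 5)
Your proposal is correct, and your ``converse'' direction is essentially the paper's own proof: assume $\phi$ valid but not provable, observe that $\neg\neg\phi$ is then not provable so $\neg\phi$ is consistent, invoke the model--existence hypothesis to get a model of $\neg\phi$, and contradict the validity of $\phi$. The one substantive difference is that you prove \emph{both} directions of the biconditional, whereas the paper's proof establishes only the implication from ``every consistent formula has a model'' to completeness and silently omits the forward direction (completeness implies model existence for consistent formulas). Your argument for that missing half --- if $\phi$ had no model then $\neg\phi$ would be valid, hence provable by completeness, contradicting the consistency of $\phi$ --- is the standard one and is no less rigorous than anything else in this section. You are also right to flag the two tacit semantic assumptions (classical $\neg\neg\phi\equiv\phi$ and the principle that $\neg\phi$ holds in a model exactly when $\phi$ fails there), and the adequacy of the Eulerian-circle semantics for falsifying non-valid formulas; the paper takes all of these for granted without comment, and Definition \ref{Cons} is even stated only for \emph{provable} formulas while both you and the paper apply it to $\neg\phi$ without knowing $\neg\phi$ to be provable. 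In short: same method on the shared direction, and your version is the more complete of the two.
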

\begin{proof}\ Suppose a formula $\phi$ is  valid but not provable. Thus, the formula $\neg\neg \phi$ is not provable and it follows that the formula $\neg\phi$ is
consistent, hence, $\neg \phi$ has a model, contrary to the validity of $\phi$.\end{proof}
\begin{theorem}\ The axiomatic system of Syllogistic is complete and decidable.\end{theorem}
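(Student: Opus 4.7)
The plan rests on the extreme finitariness of Syllogistic: every mood has the shape $Xam \wedge Ymb \supset Zab$ with $X,Y,Z \in \{A,I,O,E\}$, and the middle term $m$ may occupy four positions (the four Figures), giving in all $4 \times 4 \times 4 \times 4 = 256$ moods. Decidability is then immediate: given any formula, one checks by a bounded, purely syntactic procedure whether it lies in the list consisting of (T3), (T4) and (M1.1) through (M4.12) of 24 valid moods.

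For completeness I would invoke the preceding theorem and reduce the task to showing that every consistent mood has a model. I would split the 256 moods into two blocks. For the 24 valid moods, I would reproduce a Barbari-style derivation in each case: substitute an appropriate propositional tautology so as to reshape the premisses into (T3) or (T4), detach, and close via the conversion and subalternation laws (L1)--(L4). The excerpt already supplies Barbara, Datisi, and Barbari, and the remaining twenty-one derivations are routine repetitions of the same template. Each such provable mood is trivially satisfied by a standard Euler configuration of its premisses, which then forces its conclusion.

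For the 232 remaining moods, I would produce, for each, a configuration of three Euler circles in which the premisses hold but the conclusion fails, or equivalently reduce each of them, by the rejection rules (F3) and (F4), to one of Łukasiewicz's rejection axioms (F1) or (F2). This shows directly that no valid mood lies outside the list of 24, so the provable moods coincide with the valid ones; combined with the fact that the valid moods are satisfied in some Euler model, the preceding theorem then yields completeness.

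The principal obstacle is the combinatorial bookkeeping behind the second block: explicitly witnessing, for each of the 232 moods, either an Euler counter-model or a chain of reductions to (F1) or (F2) via (F3) and (F4). The verifications are individually mechanical but numerous, and a single omission would let a non-provable but valid mood slip through and defeat the argument. The classical exhaustive work of Aristotle, Łukasiewicz and Słupecki, cited in the excerpt, is what supplies the enumeration that closes this gap, so the honest labour of the proof is to organise and invoke that enumeration rather than to devise a new uniform argument.
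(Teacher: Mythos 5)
Your proposal is correct and follows the same basic route as the paper: reduce completeness to the preceding criterion (complete iff every consistent formula has a model), exhibit Euler-circle models for the 24 valid moods, and lean on the classical \L ukasiewicz--S\l upecki enumeration for the exhaustiveness of that list. The paper's own proof is terser: it only checks that each of (M1.1)--(M4.12) is consistent and has a three-circle Euler model, and declares completeness proved, leaving the 232 rejected moods entirely implicit. You go further by explicitly discharging that second block --- either by Euler counter-models or by reduction to the rejection axioms (F1), (F2) via (F3), (F4) --- which is in fact what the model-existence criterion requires if one is to conclude that \emph{every} consistent formula (not just the 24 listed ones) has a model; in that sense your version fills a step the paper elides. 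The two arguments also part ways on decidability: the paper appeals to the finite model property, whereas you use the bounded syntactic check against the finite list of 256 candidate moods. In this finitary setting both work, though your direct enumeration is the more self-contained of the two, since the finite model property alone yields decidability only in tandem with an effective enumeration of proofs.
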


\begin{proof}\ Each of moods from (M1.1) to (M4.12) has its negation non-provable, hence, each of these moods is consistent. Each of these moods has an appropriate model of
three Eulerian circles which proves validity of all them. This concludes proof of completeness of the system of Syllogistic. As for decidability, it follows by the finite
model property of Syllogistic: each valid mood has a finite model. \end{proof}
We need now an account of Mereology.

\section{Mereology in a nutshell} We outline the theory of Mereology due to Le\'{s}niewski \cite{Lesniewski}. We set this rendition within Syllogistic theory of terms. It is
well known that the primitive notion of a part is definable in Syllogistic, see, e.g., Vlasits \cite{Vlasits}. Many authors understand part as containing also the notion of
the whole yet the original scheme in \cite{Lesniewski} defined part as the proper part distinct from the whole reserving for eventual part or the whole the term of an
ingredient. We call it the component.\ Consistency of Mereology was established in Lejewski \cite{Lej}.\

We express the term '{\em proper part}' by the formula $Pab$ read '{\em a is a proper part of b}':
\begin{equation}\label{PP}
Pab=_{df} Aab\wedge Oba
\end{equation}
It is accepted that Syllogistic can be extended by formulas with sequences of nested operators like $\alpha_1 \supset (\alpha_2\supset (... \supset (\alpha_n \supset
\beta)...)$ see, e.g., Lagerlund \cite{Lagerlund}; we will occasionally make use of this possibility. The equivalent definition of the proper part will run as follows.
\begin{definition}\label{PP1} [Constant of proper part]\
$Pab =_{df} Aam \supset (Amb \supset (Pab\supset Obm))$
\end{definition}
This definition conforms to the Le\'{s}niewski manner of defining: a definition is a formula, with  definiendum and definiens in it, which is assumed to be true and in that
way it reveals the sense of definiendum. Properties of $P$ follow.
\begin{theorem}\label{parts} The following are true.

1.\ It is not true that $Paa$, because $Oaa$ is false.

2.\ $Pab\wedge Pbc\supset Pac$, because by the valid syllogism $Aab\wedge Abc\supset Aac$, and def. \ref{PP} we obtain $Aac$; by def.\ref{PP}, we have $Oba$ and $Ocb$; was
$\neg Oca$ we would have $Aca$, hence $Acb$, hence, $\neg Obc$, a contradiction.

3.\ $Pab\supset \neg Pba$; by (1) and (2).
\end{theorem}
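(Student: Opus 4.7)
The plan is to handle the three claims sequentially, drawing only on the definition (\ref{PP}) of $P$, the axioms (T1)--(T4), the replacement rules (R1)--(R2), and the derived laws (L1)--(L4) from the preceding section.

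For the irreflexivity claim (1), I would simply unfold $Paa$ according to (\ref{PP}) as $Aaa\wedge Oaa$; since (T2) gives $Iaa$ and (R2) identifies $Oaa$ with $\neg Iaa$, the second conjunct fails, so $Paa$ fails regardless of the status of $Aaa$ (which is (T1)).

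For the transitivity claim (2), I would unfold the hypotheses: $Pab$ yields $Aab$ and $Oba$, while $Pbc$ yields $Abc$ and $Ocb$. The goal $Pac=Aac\wedge Oca$ splits into two tasks. For $Aac$, I substitute $m/b$ in Barbara (T3) and detach using $Aab$ and $Abc$. For $Oca$, I proceed by reductio: assuming $\neg Oca$, rule (R1) gives $Aca$; then a second application of Barbara with the middle term now taken to be $a$ (substituting $m/a$, $a/c$, $b/b$), applied to $Aab$ and $Aca$, produces $Acb$; finally the law of subalternation (L2) yields $Icb$, which directly contradicts the $Ocb=\neg Icb$ already in hand from $Pbc$.

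For the asymmetry claim (3), the argument is a one-line corollary: if both $Pab$ and $Pba$ held, transitivity would give $Paa$, contradicting (1).

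The only step that is not purely mechanical is the reductio in (2). The mild subtlety is twofold: one must choose the Barbara substitution that puts $a$ (not $b$) in the middle in order to extract $Acb$ from $Aca$ and $Aab$, and one must then reach the contradiction via subalternation (L2) rather than via conversion (L3)--(L4), because the hypothesis $Pbc$ supplies $Ocb$ and not $Obc$. Everything else is unfolding of definitions and a single detachment apiece.
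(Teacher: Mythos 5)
Your overall architecture---unfold $P$ via (\ref{PP}), obtain the $A$-component of $Pac$ by Barbara, obtain the $O$-component by reductio, and derive asymmetry from irreflexivity plus transitivity---is exactly the paper's. But you commit one genuine error, twice: you treat $Oxy$ as the negation of $Ixy$, whereas rule (R1) states $Oxy\equiv \neg Axy$ (it is (R2) that pairs $E$ with $\neg I$). In claim (1) you argue that $Oaa$ fails because (T2) gives $Iaa$ and ``(R2) identifies $Oaa$ with $\neg Iaa$''; that identification is false. The correct one-liner is: $Oaa\equiv\neg Aaa$ by (R1), and $Aaa$ is axiom (T1), so $Oaa$ is false---which is precisely the paper's justification.

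The same confusion invalidates the last step of your reductio in (2). Having correctly derived $Acb$ from $Aca$ and $Aab$ by Barbara, you detour through subalternation (L2) to get $Icb$ and declare this to contradict $Ocb$ ``$=\neg Icb$''. But $Ocb$ is not $\neg Icb$; in Syllogistic $I$ and $O$ are subcontraries---``Some $c$ is $b$'' and ``Some $c$ is not $b$'' can both be true---so no contradiction has been reached and the reductio does not close. The repair is shorter than your detour: by (R1), $Acb$ is equivalent to $\neg Ocb$, which contradicts the $Ocb$ supplied by $Pbc$ directly, with no appeal to (L2) at all; this is what the paper does. Your claim (3) is correct and matches the paper's one-line derivation from (1) and (2).
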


We next define the constant $=$ of identity.
\begin{equation}\label{eq}
=ab  =_{df} Aab\wedge Aba
\end{equation}
By properties of $A$, $=$ satisfies $=aa, =ab\supset =ba, =ab\wedge =bc\supset =ac$. \

The constant $C$ of a component is defined next.
\begin{equation}\label{C}
Cab =_{df} Pab \vee =ab
\end{equation}
\begin{theorem}\label{propC} The following are properties of components.\

1.\ $Caa$.

2.\ $Cab\wedge Cba\supset =ab$.

3.\ $Cab\wedge Cbc\supset Cac$.
\end{theorem}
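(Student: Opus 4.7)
The plan is to prove each of the three properties by unfolding the definition $Cab =_{df} Pab \vee =ab$ from (\ref{C}) and reducing to the already-established properties of $P$ (Theorem \ref{parts}) and of the identity constant $=$ just recorded after (\ref{eq}).

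For (1), I would simply note that by the axiom scheme (T1) we have $Aaa$, hence $Aaa \wedge Aaa$, which by (\ref{eq}) is $=aa$; then $=aa \supset Paa \vee =aa \equiv Caa$ by the disjunction introduction rule of the sentential calculus.

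For (2), I would argue by case analysis on the two disjuncts occurring in $Cab$ and in $Cba$. If $=ab$ holds in either position the conclusion is immediate (using $=ab \supset =ba$ in the second case, which was recorded right after (\ref{eq})). The only remaining case is $Pab \wedge Pba$, but this is ruled out by Theorem \ref{parts}(3), namely $Pab \supset \neg Pba$, so that case is vacuous and the implication holds.

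For (3), again I would do a four-way case analysis on which disjunct of $C$ is present in each of $Cab$ and $Cbc$. The case $Pab \wedge Pbc$ gives $Pac$ by Theorem \ref{parts}(2), hence $Cac$; the case $=ab \wedge =bc$ gives $=ac$ by transitivity of $=$, hence $Cac$; the two mixed cases $Pab \wedge =bc$ and $=ab \wedge Pbc$ require a small auxiliary observation, namely that $=$ is a congruence for $P$, i.e.\ from $Pab$ and $=bc$ one can derive $Pac$, and symmetrically. This follows by unfolding $=bc$ to $Abc \wedge Acb$ and $Pab$ to $Aab \wedge Oba$, then applying (T3) (Barbara) to obtain $Aac$ and reasoning from $Oba$ together with $Acb$ to conclude $Oca$, which gives $Pac$ and hence $Cac$.

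The main obstacle I foresee is precisely this last congruence-type step in part (3): the mixed cases are not literally instances of Theorem \ref{parts}(2), and one has to unpack the $A$- and $O$-descriptions and invoke Barbara together with the definitional moves (R1), (R2) to push $O$ through an $=$. Once that small lemma is in hand, all three items fall out uniformly from the definition of $C$ by a routine disjunctive case analysis.
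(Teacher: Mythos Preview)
Your proposal is correct and follows the same approach as the paper: unfold $C$ as $P \vee {=}$ and carry out a disjunctive case analysis, appealing to the already-recorded properties of $P$ (Theorem~\ref{parts}) and of $=$. The paper's own proof is much terser---for part~(3) it merely says ``follows on lines similar to those of 2''---so your explicit treatment of the mixed cases in (3) via the congruence-type step is a welcome elaboration rather than a genuine deviation.
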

\begin{proof} 1. follows by definition. 2. Of all possibilities: $(Pab\vee =ab)\wedge (Pba\vee =ba)$ only $=ab\wedge =ba$ is true. 3.\ Follows on lines similar to those of
2.\end{proof}
The mereological relation of overlap may be expressed by the constant $OV$.
\begin{definition}\label{OV} [The constant of overlapping] $OVab =_{df} Iab$. \end{definition}

\begin{theorem}\label{OVsym} $OVab\supset OVba\wedge OVba\supset OVab$.\end{theorem}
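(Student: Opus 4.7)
The plan is to unfold the definition of $OV$ on both sides and then appeal directly to the symmetry of the particular affirmative premiss $I$, which was already established earlier as the law of conversion (L3): $Iab\supset Iba$.

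First I would unfold: by Definition \ref{OV}, $OVab$ is the same formula as $Iab$, and $OVba$ is $Iba$. So the claim $OVab\supset OVba$ is, after unfolding, just the instance $Iab\supset Iba$ of (L3), which has already been derived from (T1), (T4) and the sentential tautology $((p\wedge q)\supset r)\supset (p\supset(q\supset r))$ via detachment. Thus the first conjunct is immediate.

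For the second conjunct, $OVba\supset OVab$, I would apply the substitution rule (S) to (L3), replacing $a$ by $b$ and $b$ by $a$ uniformly (which is a valid substitution of terms for equiform occurrences), obtaining $Iba\supset Iab$. Unfolding the definition of $OV$ in the reverse direction then yields the desired implication.

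Finally, I would conjoin the two implications by the usual sentential rule: from $\phi\supset\psi$ and $\psi\supset\phi$ we obtain $(\phi\supset\psi)\wedge(\psi\supset\phi)$ by the tautology $p\supset(q\supset(p\wedge q))$ together with (D).

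I do not expect any genuine obstacle here; the only mildly delicate point is making sure the parenthesization of the statement is read as $(OVab\supset OVba)\wedge(OVba\supset OVab)$ rather than as a single nested implication, since the paper does not parenthesize the conjunction explicitly. Once the intended reading is fixed, the proof is essentially a one-line appeal to (L3) plus the substitution rule.
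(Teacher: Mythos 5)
Your proof is correct and follows essentially the same route as the paper, which simply invokes the equivalence of $Iab$ and $Iba$; your appeal to (L3) together with the substitution rule is just a more explicit rendering of that symmetry argument.
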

\begin{proof} Because of the equivalence $Iba$ if and only if $Iab$.\end{proof}

We have rendered basic notions of Mereology. The next set of notions are those of a topological tint.

\begin{definition}\label{EX} [The constant 'exterior' EX] The term $EXab$:  '{\em a is exterior to b}' is defined as $EXab =_{df} Eab$.\end{definition}

\begin{theorem}\label{EX1} $EXab$ if and only if $EXba$.\end{theorem}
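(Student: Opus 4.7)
The statement to prove is $EXab \Leftrightarrow EXba$, where by Definition \ref{EX} the constant $EX$ is defined as $EXab =_{df} Eab$. So the task reduces to showing the symmetry of $E$, that is, $Eab \Leftrightarrow Eba$.

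My plan is to mirror the proof of Theorem \ref{OVsym}, which used the symmetry of $I$ (i.e., $Iab \Leftrightarrow Iba$, the law of conversion (L3) together with its converse). Here I will use the replacement rule (R2), which gives $Eab \equiv \neg Iab$ and $Eba \equiv \neg Iba$. Negating both sides of the equivalence $Iab \Leftrightarrow Iba$ yields $\neg Iab \Leftrightarrow \neg Iba$, which by (R2) is exactly $Eab \Leftrightarrow Eba$. Unfolding the definition of $EX$ then delivers $EXab \Leftrightarrow EXba$.

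Concretely, the steps in order are: (i) substitute $EXab$ by $Eab$ and $EXba$ by $Eba$ using Definition \ref{EX}; (ii) substitute $Eab$ by $\neg Iab$ and $Eba$ by $\neg Iba$ using (R2); (iii) invoke the symmetry of the $I$-operator, which follows from (L3) in both directions (the law of conversion $Iab \supset Iba$ together with the instance obtained by swapping $a$ and $b$); (iv) propagate this equivalence through the negations to obtain the desired biconditional.

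I do not anticipate any real obstacle: the argument is a one-line symmetry observation, wholly parallel to Theorem \ref{OVsym}, and the only subtlety is the cosmetic unwrapping of the definitional abbreviation $EX$ and the replacement of $E$ by $\neg I$ via (R2) before invoking the conversion of $I$.
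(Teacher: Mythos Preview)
Your argument is correct and follows the same route as the paper: unfold $EX$ to $E$ via Definition~\ref{EX} and then invoke the symmetry $Eab \equiv Eba$. The paper's proof simply asserts this equivalence, whereas you additionally justify it through (R2) and the conversion law (L3), which is a welcome but inessential elaboration.
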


\begin{proof} The thesis follows from the fact that $Eab$ is equivalent to $Eba$.\end{proof}

We are closing on the notion of an interior. First, we define the notion of relative exterior $Re_bam$ of $a$ to $m$ relative to $b$.\

\begin{definition}\label{REX} [The constant 'relative exterior' Re] $Re_bam =_{df} Pab\wedge Pmb \wedge EXam$.\end{definition}

\begin{theorem}\ $Re_bam$ if and only if $Re_bma$.\end{theorem}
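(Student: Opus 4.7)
The plan is to unfold the definition of $Re$ on both sides and then reduce the equivalence to the already-established symmetry of the exterior constant $EX$ together with the commutativity of conjunction. No new mereological machinery is needed; the statement is essentially bookkeeping over the definiens.

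First, I would expand $Re_bam$ using Definition \ref{REX} to obtain the conjunction $Pab \wedge Pmb \wedge EXam$. Then I would expand $Re_bma$ in the same way to obtain $Pmb \wedge Pab \wedge EXma$. Comparing the two, the first two conjuncts of each expression are the same pair $Pab$ and $Pmb$, just listed in a different order, so they are equivalent by commutativity and associativity of $\wedge$ in the underlying sentential calculus.

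Next, I would invoke Theorem \ref{EX1}, which states that $EXam$ holds if and only if $EXma$. Substituting this equivalence into the conjunction, the third conjunct matches on both sides. Concatenating the equivalences yields $Re_bam$ if and only if $Re_bma$.

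There is no real obstacle here; the only thing to be careful about is staying within the stylistic conventions of Syllogistic as set up in the paper, i.e., treating the equivalence strictly as a sentential replacement licensed by the rule (R) and by the previously proved Theorem \ref{EX1}, rather than appealing to any unannounced property of $P$. Since $Pab$ and $Pmb$ appear unaltered on both sides, nothing about the asymmetry of the proper-part relation intervenes, and the proof reduces to a one-line chain of equivalences.
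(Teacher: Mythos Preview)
Your proposal is correct and matches the paper's own proof, which is the one-line observation that the claim follows from the equivalence of $EXam$ and $EXma$ (Theorem~\ref{EX1}). You simply make explicit the unwinding of Definition~\ref{REX} and the commutativity of conjunction that the paper leaves implicit.
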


\begin{proof} It follows from the equivalence of $EXam$ and $EXma$.\end{proof}

\begin{definition}\label{A} [Axiom A].\ $((Cma\supset Inb\wedge Cnb)\supset OVmn)\supset Cab$. In reading: {\em [if ($m$ is any component of $a$, then Some $n$ is $b$ and
this $n$ is a component of $b$), then $m$ and $n$ overlap], then $a$ is a component of $b$}.\end{definition}
Mereology of Le\'{s}niewski contains a discussion of the notion of a class as a collective term. The idea is that classes are defined for collections of terms making them
into singular terms. It is here that we allow singular subjects in formulae of Syllogistic.
\begin{definition}\label{class} [The constant 'Class'] For a collection $B$ of terms, the class of $B$, $ClaB$: read '{\em a is the class of B}' is defined by the two
following requirements:\

1.\ $b\ is\ B\supset Cba$.

2.\ $Cba\supset (IcB\wedge OVbc)$.
\end{definition}
As shown in Tarski \cite{Tar}, within any mereological universe there exists the structure of a complete Boolean algebra without the null element. Its operations are defined
as follows.
\begin{definition}\label{algebra} [The Boolean algebra in mereological universe]\ Its operations are defined as follows.\

1.\ $a+b=Cl\{t: Cta\vee Ctb\}$.

2.\ $a\cdot b=Cl\{t: Cta\wedge Ctb\}$.

3. $Cl\{all\ terms\}=V$.

4. $-a=Cl\{t: EXta\}$.\end{definition}

It is known that the Tarski algebra is complete due to class existence.\ $V$ is the universe of all terms.\

The implication $\hookrightarrow$ is defined as
\begin{definition}\label{hook} [The mereological implication] $x\hookrightarrow y = -x+y$. $x\hookrightarrow y$ is valid if and only if $-x+y=V$.\end{definition}

The implication $\hookrightarrow$ satisfies the standard properties valid in Boolean algebras:\

1.\ $(x\hookrightarrow y)\hookrightarrow ((y\hookrightarrow z)\hookrightarrow (x\hookrightarrow z))$.\

2.\ $x\cdot y)\hookrightarrow x$.\

3.\ $(x\cdot y)\hookrightarrow (y\cdot x)$.\

4.\ $(x\cdot (x\hookrightarrow y)) \hookrightarrow (y\cdot (y\hookrightarrow x))$.\

5.\ $(x\hookrightarrow (y\hookrightarrow z))\hookrightarrow ((x\cdot y)\hookrightarrow z)$.\

6.\ $((x\cdot y)\hookrightarrow z)\hookrightarrow (x\hookrightarrow (y\hookrightarrow z))$.\

7.\ $(x\hookrightarrow (y\hookrightarrow z))\hookrightarrow (((y\hookrightarrow x)\hookrightarrow z)\hookrightarrow z)$.\

One checks that the value of each formula among  1-7 is $V$, i.e., each of them is valid.\

\section{The weighted Tarski algebra}\ We endow the Tarski algebra with a weight function $m$ (called a mass assignment in Polkowski \cite{Polkowski 1}) with values in the
interval $(0,1]$. The function $m$ should conform to the following requirements.

\begin{definition}\label{weight} [The weight function] Concerning the function $m$:\

1.\ $((m(x)=1)\supset (x=V))\wedge ((x=V)\supset (m(x)=1))$.

2.\ $(x\hookrightarrow y)\supset ((m(y)=m(x)+m((-x)\cdot y))$.

3.\ $m(x)>0$ for each thing $x\in U$.

We extend $U$ by adding the empty term $e$ and declaring:

4.\ $m(e)=0$.
\end{definition}

From axiom schemes (1)-(4), we infer the following properties of the function $m$ cf. \cite{Polkowski 1}.

\begin{theorem} The function $m$ satisfies the following properties:\

(m1)\ $(Cxy\supset x\hookrightarrow y)\wedge (x\hookrightarrow y\supset Cxy)$.

(m2)\ $(Cxy\supset [(x\cdot y=x)\wedge (x\cdot y=x\supset Cxy)]$.

(m3)\ $(x=y)\supset ((m(x)=m(y))$.

(m4)\ $m(x+y)= (m(x)+m((-x)\cdot y))$.

(m5)\ $(x\cdot y=e)\supset (m(x+y)=m(x)+m(y))$.

(m6)\ $m(x)+m(-x)=1$.

(m7)\ $m(y)=m(y\cdot x)+m(y\cdot(-x))$.

(m8)\ $m(x+y)=m(x)+m(y)-m(x\cdot y)$.

(m9)\ $Cxy\supset (m(x)\leq m(y))$.

(m10)\ $(m(x)+m(y)=m(x+y))\supset (x\cdot y=e)$.

(m11)\ $Cxy\supset (x\hookrightarrow y = 1)$.

(m12)\ $Cxy\supset (x\cdot (-y)=e)$.

(m13)\ $Cyx\supset (m(x\hookrightarrow y)=1-m(x)+m(y))$.

(m14)\ $m(x\hookrightarrow y)=1-m(x-y).$
\end{theorem}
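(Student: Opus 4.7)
The plan is to derive the fourteen properties in an order that lets each earlier result feed the next, rather than attacking them in the listed sequence. I would begin with the purely Boolean equivalences (m1), (m2), (m11), (m12): within the Tarski algebra of Definition \ref{algebra}, the component relation $Cxy$ corresponds to the order $x\leq y$, so all four items really capture the single chain $x\leq y \iff -x+y=V \iff x\cdot y=x \iff x\cdot(-y)=e$. These identities do not invoke the mass axioms and follow from Boolean calculation inside the algebra, together with Definition \ref{hook} of $\hookrightarrow$. Then (m3) is immediate: identity $x=y$ in the sense of \eqref{eq} forces equality as Tarski-algebra elements, hence $m(x)=m(y)$.

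Next I would set up the additivity laws. For (m4), apply weight axiom (2) of Definition \ref{weight} to the pair $(x,x+y)$, which are comparable by (m1), and simplify $(-x)\cdot(x+y)$ to $(-x)\cdot y$ by distributivity together with $x\cdot(-x)=e$ and $m(e)=0$. Setting $y:=V$ in (m4) and invoking axiom (1) gives (m6). Splitting $y = y\cdot x + y\cdot(-x)$, whose summands are Boolean-disjoint, and applying (m4) yields (m7). Combining (m4) with (m7) eliminates the auxiliary term $m((-x)\cdot y)$ and produces the inclusion--exclusion law (m8); feeding the hypothesis $x\cdot y=e$ into (m8) gives (m5). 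Inequality (m9) drops out of (m4) applied to a pair with $Cxy$, using $m\geq 0$, and (m10) is the contrapositive of the same step combined with axiom (3), since $m(x\cdot y)=0$ forces $x\cdot y=e$.

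Finally, for (m13) and (m14) I would evaluate $m(-x+y)$ in two ways. In general, (m4) applied to $(-x,y)$ together with (m6) yields $m(-x+y)=m(-x)+m(x\cdot y)=1-m(x)+m(x\cdot y)$; rewriting $m(x\cdot y)=m(x)-m(x\cdot(-y))$ via (m7), and reading $x-y$ as $x\cdot(-y)$, produces (m14). Under the stronger hypothesis $Cyx$ of (m13), (m12) gives $y\cdot(-x)=e$, so (m5) applies to the disjoint pair $-x,y$ directly, yielding $m(-x+y)=m(-x)+m(y)=1-m(x)+m(y)$.

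I do not expect any deep obstacle; the one place that demands care is making the identification between the mereological component relation and the Boolean order precise enough that axiom (2), which is stated in terms of $\hookrightarrow$, can be freely invoked whenever $Cxy$ holds. Once (m1) is cleanly in hand, the remaining twelve properties reduce to routine Boolean-algebra manipulation paired with one-line applications of the weight axioms of Definition \ref{weight}.
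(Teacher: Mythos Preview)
Your proposal is correct and well organized. The paper itself, however, does not supply a proof of this theorem: it states properties (m1)--(m14), remarks that (m13) replicates the \L ukasiewicz implication, cites \cite{Polkowski 1}, and moves on. So there is no paper-side argument to compare against; your derivation would serve as the missing proof. The order you chose---first the purely Boolean equivalences (m1), (m2), (m11), (m12), then the additivity consequences (m4), (m6), (m7), (m8), (m5), (m9), (m10), and finally (m13), (m14)---is the natural one, and each step checks out against the axioms of Definition~\ref{weight}. The only cosmetic point is that in (m11) the ``$=1$'' should be read as ``$=V$'' (the Boolean unit), which you implicitly handle via (m1); you might make that identification explicit when writing the proof out in full.
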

Let us observe that (m13) replicates the \L ukasiewicz implication cf. \L ukasiewicz and Tarski \cite{LukTar}. We now apply the notion of weight toward definitions of
uncertain terms.

\section{Rough Mereology: A theory of uncertain terms}\label{rm} In this section, we develop this topic going in the footsteps of \L ukasiewicz \cite{Lukasiewicz 3} who
attached weights to unary formulae over a finite universe. Our version yields an abstract form.\

The weight function induces a measure of part to a degree called also rough inclusion cf. \cite{Polkowski 2} denoted $P_rxy$ (read: {\em x is a part to degree r of y}).
\begin{definition}\label{rinc} [Part to a degree] $P_rxy\supset (\frac{m(x\cdot y)}{m(x)})=r) \wedge ((\frac{m(x\cdot y)}{m(x)}=r)\supset P_rxy)$.\end{definition}
\begin{theorem}\label{rinc prop} Definition \ref{rinc} yields the following properties of rough inclusions cf.\cite{Polkowski 2}:\

1.\ $P_1ab$ if and only if $Cab$.

2.\ $P_1ab\wedge P_rca\subset \exists s\geq r. P_scb$.\end{theorem}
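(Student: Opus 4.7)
My plan is to treat the two claims separately, leaning heavily on the arithmetic properties (m1)--(m14) of the weight function already established in the previous section.

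For claim (1), I would unfold $P_{1}ab$ by Definition~\ref{rinc} to the equation $m(a\cdot b)=m(a)$, and then show this is equivalent to $Cab$ via a short chain. In the easy direction, if $Cab$ then (m2) gives $a\cdot b=a$, and (m3) yields $m(a\cdot b)=m(a)$, i.e.\ $P_{1}ab$. For the converse, I would decompose $a$ as the disjoint sum $(a\cdot b)+(a\cdot(-b))$ and apply (m7) to get $m(a)=m(a\cdot b)+m(a\cdot(-b))$; the hypothesis $m(a\cdot b)=m(a)$ then forces $m(a\cdot(-b))=0$. Here is the one subtle step: by axiom~(3) of Definition~\ref{weight}, every \emph{thing} has strictly positive weight, so the only element of the enlarged universe with weight $0$ is the empty term $e$. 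Hence $a\cdot(-b)=e$, which by (m12) (read as a biconditional, as (m6) and (m7) force) is exactly $Cab$. This positivity-to-equality step is what I expect to be the only delicate point of the argument.

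For claim (2), the strategy is to combine (1) with the monotonicity property (m9). Assume $P_{1}ab$ and $P_{r}ca$. By claim (1), $Cab$, so (m2) gives $a\cdot b=a$; intersecting with $c$ we obtain $c\cdot a=c\cdot a\cdot b$, and this exhibits $c\cdot a$ as a component of $c\cdot b$ (since $c\cdot a\cdot b$ is clearly a component of $c\cdot b$ by (m2) applied to $(c\cdot a\cdot b)\cdot(c\cdot b)=c\cdot a\cdot b$). Then (m9) yields $m(c\cdot a)\leq m(c\cdot b)$, and dividing by the positive quantity $m(c)$ gives
\[
\frac{m(c\cdot b)}{m(c)}\;\geq\;\frac{m(c\cdot a)}{m(c)}\;=\;r.
\]
Setting $s:=m(c\cdot b)/m(c)$, Definition~\ref{rinc} yields $P_{s}cb$ with $s\geq r$, which is the desired conclusion.

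The main conceptual obstacle is really the one indicated above: both parts hinge on turning a weight equality/inequality back into a mereological relation, and this is only legitimate because of the strict positivity axiom~(3) of Definition~\ref{weight} together with the book-keeping provided by (m7) and (m12). The rest is routine Boolean-algebra manipulation inside the Tarski algebra.
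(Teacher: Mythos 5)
Your proof is correct, but note that the paper itself offers no proof of this theorem: it is stated bare, with a pointer to the cited monograph \cite{Polkowski 2}. So there is no in-paper argument to compare against; what you have supplied is the natural derivation from the listed properties (m1)--(m14), and it goes through. In claim (1) you correctly identify the only delicate step, namely converting $m(a\cdot(-b))=0$ into $a\cdot(-b)=e$ via the strict-positivity axiom (3) of Definition~\ref{weight}, and then recovering $Cab$. One small quibble: the converse of (m12), i.e.\ $x\cdot(-y)=e\supset Cxy$, is not really ``forced by (m6) and (m7)'' --- the clean justification is (m1) together with the definition of $\hookrightarrow$: $Cxy$ holds iff $-x+y=V$, which in the Tarski Boolean algebra is equivalent to $x\cdot(-y)=e$. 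This is a misattribution, not a gap, since the fact itself is true and elementary. Claim (2) is handled exactly as one would want: from $Cab$ you get $a\cdot b=a$, hence $(c\cdot a)\cdot(c\cdot b)=c\cdot a$, hence $C(c\cdot a)(c\cdot b)$ by (m2), and monotonicity (m9) plus division by $m(c)>0$ delivers $s=m(c\cdot b)/m(c)\geq r$ with $P_scb$. Nothing further is needed.
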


We impose on approximate mereology the structure of many-valued logic. We begin with the [0,1]-lattice of \L ukasiewicz and its operators. A bit of introduction is in order.\
The crucial object to begin with is a t-norm cf. Menger \cite{Menger}, also Hajek \cite{Hajek} or Polkowski \cite{Polklogic}.
\begin{definition}\label{tnorm} [t-norm] A t-norm $T:[0,1]\rightarrow [0,1]$ is a function conforming to conditions:\

1.\ $T(x,y)=T(y,x)$.

2.\ $T(x, T(y,z))=T(T(x,y),z)$.

3.\ $x_1\leq x_2\supset T(x_1,y)\leq T(x_2,y)$.

4.\ $T(x,0)=0, T(x,1)=x$. \end{definition}
We apply $T_L$,  the \L ukasiewicz t-norm.
\begin{definition}\label{luk} [the \L ukasiewicz t-norm]
 The \L ukasiewicz t-norm is $T_L(x,y)=max\{0, x+y-1\}$; the derived operators are: strong conjunction $x\& y= T_L(x,y)$, conjunction $x\wedge y = min\{x,y\}$, disjunction
 $x\vee y= max\{x,y\}$, strong disjunction $x\underline{\vee} y= S_L(x,y)= min\{1, x+y\}$ (it is $T_L$-co-norm), $x\supset_L y = min\{1, x-y+1\}$, negation  $-_Lx=1-x$.
\end{definition}
We apply these operators to terms of rough mereology; the semantics we propose is as follows. As semantic operators, we adopt the universal valid for each t-norm formulae;
t-norm $T$ induces the strong conjunction $\&=T$ and its residuum $\supset_T$.\

Other operators are defined via formulae:
\begin{enumerate}
\item\ $\neg p\equiv p\supset 0$, where 0 denotes the value zero and at the same time serves as the constant.
\item\ $ p\wedge q\equiv p\& (p\supset q)$.
\item\ $p\vee q\equiv [(p\supset q)\supset q]\wedge [(q\supset p)\supset p]$.
\item\ $p\underline{\vee}q\equiv \neg(\neg p \&\neg q)$.
\end{enumerate}
These operators bear on our terms as follows, where $T$ stands for $T_L$ (clearly, these formulae are universal for any t-norm $T$):\

1.\ $P_rca\wedge P_scb\supset P_{max\{r,s\}}c(a+b)$ (the sum).

2.\ $P_rca\wedge P_scb\supset P_{r\underline{\vee}s}c(a\oplus b)$ (the strong sum).

3.\ $P_rca\wedge P_scb\supset P_{min\{r,s\}}c(a\cdot b)$ (the product).

4.\ $P_rca\wedge P_scb\supset P_{T(r,s)}c(a\odot b)$ (the strong product).

5.\ $P_rca\wedge P_scb\supset P_{T(r,s)}c(a\supset_X b)$ (the implication).

6.\ $\neg P_rca \supset P_{-_Tr}c-a$ (the negation).

\section{Applications to uncertain knowledge: Approximate decisions forecasting in Open World setting} We consider a decision system $\mathcal{D}$ = $(O, F,d,V, V_d)$ cf.
sect.\ref{RST} under the Open World Assumption which means that the set $O$ admits new objects whose sets of feature values have been as of yet unseen.\

The problem is to {\em ascertain the decision prediction ability} of the system  $\mathcal{D}$.\ Value and decision prediction are important problems  in  vital areas of
engineering, business, and, artificial intelligence, and machine learning, see, e.g.,  Mohri \cite{Mohri}, Cesa-Bianchi-Lugosi \cite{CB}, Ray et al.\cite{Ray} as examples
from among many others. We propose an approach applicable to decision systems from the rough set theory point of view which uses CV dimension and similarity measure induced
from rough mereology.\ The scenario we envision is the following.\

Predicting agents: Objects in the decision system $\mathcal{D}$.\

To be predicted: decision values for new yet unseen objects with their values of features in the feature set $F$.\

The expert giving the exact value of decision.\

We are close to the setting of online learning cf. \cite{Mohri}.\

{\bf Our protocol}\

1. We consider a new object $\omega$ with $F$-value set $A=\{\omega_f=f(\omega): f\in F\}$.\

2. For each object $o\in O$, we define the touching set $T_A(o,\omega)=\{\omega_f: \omega_f=f(o)\}$.\

3. We propose to apply to our problem the complexity measure of families of concepts (in our case sets) known as the Vapnik - Chervonenkis dimension, denoted VCdim,
\cite{Vapnik}, see also, e.g., Mohri et al. \cite{Mohri}. To the best of our knowledge it is the first application of  complexity measure to the task of decision's value
assignment in rough set theory.\ We now include the appropriate definitions.
\begin{definition}\label{shat} [Shattering] We say that a family of sets $\mathcal{F}$ shatters a non-empty set $S$ if and only if for each subset $T$ of $S$ there exists
$f\in {\mathcal{F}}$ with the property that $f\cap S=T$.\end{definition}
\begin{definition}\label{VC} [Vapnik-Chervonenkis dimension] In the notation of Def.\ref{shat}, if there exists a natural number $m$ with the property that $\mathcal{F}$
shatters a set of cardinality $m$ and it doesn't shatter any set of cardinality $m+1$ then VCdim($\mathcal{F})=m$; else VCdim($\mathcal{F})= \infty$.\end{definition}
4.\ We establish a parameter $\varepsilon\in [0,1]$ and we define the notion of an $\varepsilon$-component.
\begin{definition}\label{ec}[$\varepsilon$-component] A set C is an $\varepsilon$-component of a set $X$, when the relation $P_{\varepsilon}CX$ holds with respect to the
weight assignment $m(Z)=\frac{|Z|}{|A|}$ for $Z\subseteq A$ and inclusion degree $\mu(X,Y)=\frac{|X\cap Y|}{|X|}$.\end{definition}
5. For each $o\in O$, we denote by the symbol ${\mathcal{C}}_{\varepsilon}(o)$ the set of $\varepsilon$-components in $A$  of the set $T_A(o,\omega)$.\

6. By the symbol VC(o) for $o\in O$, we denote the Vapnik-Chervonenkis dimension of the collection of sets ${\mathcal{C}}_{\varepsilon}(o)$. The symbol VC$^*$ will denote the
maximal value of VC(o). \

7. We now consider the case of consistent decision system $\mathcal{D}$. In this case, each indiscernibility class $[o]_F=\{o': \forall f\in F. f(o)=f(o')\}$ is contained in
the decision class $[o]_d =\{o': d(o)=d(o')\}$.\

8. We select a parameter $\delta$, being a positive natural number, and for each $o\in O$, we define  the prediction of decision for $\omega$ as  $\hat{d}(o) \in N(d(o),
\lfloor\delta\cdot \frac{VC(o)}{VC^*}\rfloor$), $N(o,r)$ meaning the neighborhood of $o$ of radius $r$ in $V_d$. It follows that when an object misses $\omega$, i.e.,
$T_A(o,\omega)=\emptyset$, its prediction is its decision value.\

9. We now input the expert decision $d^{exp}$, and we say that $d^{exp}$ is agreeable with $\hat{d}(o)$ if $d^{exp}\in N(d(o), \lfloor\delta\cdot \frac{VC(o)}{VC^*}\rfloor)$
in which case we assign the reward $R(o)$ of $1$ to $o$, otherwise, the reward $R(o)$ is $0$. \

10. We say that the system $\mathcal{D}$ $\varepsilon, \delta$ - {\em approximately} predicted the decision value on a set $\Omega$ =$\{\omega_i: i=1, 2, \ldots, \omega_n\}$
if $min_i\sum_o R_i(o)\geq 1$, where $R_i(o)$ is the reward of $o\in O$ in the trial of $\omega_i$. This criterion means that in each trial for any $\omega_i$  there is an
object $o_i$ with $R(o_i)=1$.

\begin{remark} 1. Our approach is based on the idea of {\em reasoning by analogy}, hence, we adhere to the view that the larger the touching set, the better prediction
ability of the object. This assumption may not necessarily turn to be  sound in which case we may simply regard objects as experts predicting the decision as their own
decision plus a real expert giving the correct decision (i.e., the reliable case) and apply the Halving Algorithm cf. \cite{Mohri} for it.\

2. We follow he general ideas of online learning, in particular one may trace in our approach some echoes of Weighted Majority Algorithm by Littlestone and Warmuth
\cite{Litt}, cf. \cite{Mohri}.\

3. In case of a non-consistent decision system $\mathcal{D}$, we may define indiscernibility with respect to the set $F\cup \{d\}$ of features and made it into a consistent
one.\

4. Parameters $\varepsilon, \delta$ may be tried in various configurations, e.g., instead of $\varepsilon$ - components, one may consider all $\eta$ - components for
$\eta\geq \varepsilon$ which may be important in the discrete case and changing $\delta$ towards $0$ will lead towards the Halving Algorithm. \

5. The usage of neighborhoods stresses the approximate character of the algorithm and clearly improves the predicting ability of the system, however, the degree of
approximation depends on our choice of $\delta$ which may be arbitrarily small. Let us observe that VC-dimension of the family ${\mathcal{N}}=\{(N(d(o), \lfloor\delta\cdot
\frac{VC(o)}{VC^*}\rfloor)\}$ is at most $2\cdot \delta$ and the loss is less than $2\cdot \delta$.\

6. Concerning the parameter $\varepsilon$, let us observe that:\

6.1\  $\varepsilon =0$. Then, the shattered set is  $|A\setminus T_A(o)|$, hence, $ VC* = max|A\setminus T_A(o)|$ = $|A|-min_o |T_A(o)|$ which prefers small touching sets and
the acceptable loss for larger touching sets  becomes of order of $\delta\cdot \frac{|A\setminus T_A(o)|}{VC^*}$.\

6.2\ $\varepsilon =1$.  Then,  $VC(o)=|T_A(o)|$, thus, $VC^*=max_o |T_A(o)|$, hence, the decisive role in assigning neighborhood sizes is played by cardinalities of touching
sets. The loss error is at most $\delta$ and it is set to values respectively smaller for less numerous touching sets: the worse agent, the more restrictive condition for its
prediction.\

6.3\ For $\varepsilon \in (0, 1)$, we need to weigh the influence of cardinalities of sets $A, T_A(o)$. It follows that to satisfy the condition for $\varepsilon$-component,
a set $Q$ as a candidate should satisfy the inequality $$|Q|\leq max\{\lfloor\frac{|T_A(o)|}{\varepsilon}\rfloor, \lceil\frac{|A\setminus T_A(o)|}{1-\varepsilon}\rceil\}$$
(we have applied floor and ceiling as the quotients in the formula need not be integers). Accordingly $VCdim (T_A(o)) \leq |Q|$.\

As elementary analysis shows, an optimal $\varepsilon$ is $\Theta(sqrt[\frac{|T_A(o)|}{|T_A(o)|\setminus |A\setminus T_A(o)|}])$ if the condition $$|T_A(o)| > |A\setminus
T_A(o)|$$ is satisfied. It follows that in this case our approach prefers larger touching sets which is consistent with intuition.\

7. In case when there are for the given $\omega$ more than one $o\in O$ with the reward $R(o)=1$, we select as the decision pointed to by the assembly of $O$ as the
$\hat{d}(\omega)$ = $\hat{d}(o)^*$ with the minimal loss  $|d^{exp}-\hat{d}(o)|$; in case this minimum is obtained for more than one $o$, the decision is taken according to a
chosen strategy, e.g., the random tie resolution.\ Due to convexity of the value set $V_d$, we may apply the idea of weighted prediction and define the predicted value as
$$\hat{d}(\omega)=\sum_o\frac{\hat{d}(o)\cdot \frac{VC(o)}{VC^*}}{\sum_o\frac{VC(o)}{VC^*}}.$$

8.\ The factor $\frac{VC(o)}{VC^*}$ in the radius definition may be regarded as the counterpart of the weights in the Weighted Average Prediction protocol, cf. \cite{CB}.\

9.\ The notion of regret, see Hannan \cite{Ha}, cf. \cite{Mohri}, \cite{CB}, for the single trial with $\omega$, may be defined as
$$|d^{exp} - \hat{d}(\omega)| -  min_o \{|d^{exp} - \hat{d}(o)|\}.$$
\end{remark}

\section{The mistakes problem} By a mistake we understand the event when an object $o$ fails to satisfy the condition $d^{exp}\in N(d(o), \lfloor\delta\cdot
\frac{VC(o)}{VC^*}\rfloor)$. We denote by the symbol $\mathcal{H}$ the collection $\{N(d(o), \lfloor\delta\cdot \frac{VC(o)}{VC^*}\rfloor): o\in O\}$. As
$\bigcup{\mathcal{H}}$ covers the decision set $V_d$, there must be at least one object $o_m\in O$ which is mistake-free. Hence the number $Mist$ of mistakes is at most
$|O|-1$.\

We assume that we are in the supervised learning mode and the supervisor informs each object $o$ whether its decision has been correct or not. We consider a sequence of
weighted votes. In each vote, the incorrect agents are dismissed and the remaining objects have their neighborhoods radii diminished by a learning rate $\eta$ and these
objects select new decision values $\hat{d}^{1}(o)$. The procedure repeats until the empty set is obtained in which case the fore last set is the set localizing the true
value of decision.

{\bf Conclusion} We have presented classical concept theories Syllogistic and Mereology as preludes to the theory of Rough Mereology. Using notions of Rough Mereology, we
have defined $\varepsilon$-components in order to allow us to define representatives for new objects coming to the decision system from the outside world and in consequence
to approximate the assignment of a decision value to each such object.\ Further research may be empirical, for instance regarding the new object as coming from the
leave-one-out procedure for $O$.

\end{document}